\documentclass[11pt]{article}

\usepackage{arxiv}

\usepackage[utf8]{inputenc} 
\usepackage[T1]{fontenc}    
\usepackage{hyperref}       
\usepackage{url}            
\usepackage{booktabs}       
\usepackage{amsfonts}       
\usepackage{nicefrac}       
\usepackage{microtype}      
\usepackage{lipsum}
\usepackage{graphicx}
\usepackage{amsmath,amsthm,amssymb,amsfonts}%

\usepackage{color}
\usepackage{xcolor}
\usepackage[frozencache, cachedir=minted-cache]{minted}

\newtheorem{thm}{Theorem}
\newtheorem*{thm*}{Theorem}
\newtheorem{prop}[thm]{Proposition}
\newtheorem{lem}[thm]{Lemma}
\newtheorem*{lem*}{Lemma}

\theoremstyle{remark}
\newtheorem*{remark}{Remark}

\newcommand\cC{\mathcal C}
\newcommand\cCone{\mathcal{C}^{(1)}}
\newcommand\tcCone{\tilde{\mathcal{C}}^{(1)}}
\newcommand\hcCone{\hat{\mathcal{C}}^{(1)}}
\newcommand\cCtwo{\mathcal{C}^{(2)}}
\newcommand\cCthree{\mathcal{C}^{(3)}}

\newcommand\cE{\mathcal E}
\newcommand{\bigO}{\mathcal{O}}

\begin{document}
\title{A Pragmatic Method for Comparing Clusterings with Overlaps and Outliers}
\author{
    Ryan DeWolfe\thanks{Department of Mathematics, Toronto Metropolitan University, Toronto, Canada; \texttt{ryan.dewolfe@torontomu.ca}},
    ~Pawe\l{} Pra\l{}at\thanks{Department of Mathematics, Toronto Metropolitan University, Toronto, Canada; \texttt{pralat@torontomu.ca}},
    ~Fran\c{c}ois Th\'{e}berge\thanks{Tutte Institute for Mathematics and Computing, Ottawa, Canada; \texttt{theberge@ieee.org}}
    }
\date{}
\maketitle

\begin{abstract}
    Clustering algorithms are an essential part of the unsupervised data science ecosystem, and extrinsic evaluation of clustering algorithms requires a method for comparing the detected clustering to a ground truth clustering.
    In a general setting, the detected and ground truth clusterings may have outliers (objects belonging to no cluster), overlapping clusters (objects may belong to more than one cluster), or both, but methods for comparing these clusterings are currently undeveloped.
    In this note, we define a pragmatic similarity measure for comparing clusterings with overlaps and outliers, show that it has several desirable properties, and experimentally confirm that it is not subject to several common biases afflicting other clustering comparison measures.
    \keywords{Comparing Clusterings, Similarity Measure, Overlapping Clusters, Outliers}
\end{abstract}

\section{Introduction}
Clustering~\cite{clusteringhandbook} (or community detection~\cite{cdreview}) is one of the fundamental problems in unsupervised machine learning, and typically involves finding groups of ``similar'' data within a dataset.
Many clustering algorithms often only consider partitions of the data (a clustering where each object belongs to exactly one cluster), but several recent algorithms allow for outliers~\cite{hdbscan,wcc}, overlapping clusters~\cite{overlapreview}, or both~\cite{hslc,oslom}.
One method to evaluate the performance of a clustering algorithm is extrinsic clustering validity: the algorithm is judged by its ability to find a known ground truth clustering in the data.
Thus, extrinsic validity benchmarks require a method to compare clusterings~\cite{comparisonreview}.

In this note, we use set-matching~\cite{setmatching} and the $F^*$ score~\cite{F*} to define a pragmatic method for comparing clusterings that could have overlaps or outliers.
The method is simple, computationally efficient, and exhibits several intuitive properties one would expect in a comparison measure.

\section{Related Work}\label{sec:related}
Comparing partitions is a well-studied problem, and there are a wide variety of measures that are used~\cite{comparisonreview}, which can be broadly classified as either pair-counting, information theoretic, or set-matching~\cite{setmatching}.
In this section, we review some extensions to overlapping clusterings, although to the best of our knowledge, none have explicitly considered outliers.

The \textit{Omega} index~\cite{omega} extends the pair-counting Adjusted Rand Index (ARI)~\cite{ari} by extending the contingency table from agree versus disagree to counting the number of clusters in common.
This extension is reasonable and is equivalent to the ARI for partitions, but is quite harsh since it does not detect any similarity if a pair of objects does not share the exact same number of clusters in both clusterings.
Furthermore, the non-adjusted version of the Omega index suffers from the same issues as the non-adjusted Rand index: values tend to be large, even for random clusterings.
The adjustment proposed implicitly assumes a null-model for generating random clusterings, the choice of which has been shown to have an effect for comparing partitions~\cite{partition_models}, and generally requires careful consideration of the assumptions when extending beyond partitions~\cite{fuzzy_models}.

Lancichinetti et al.~\cite{onmi} proposed the overlapping Normalized Mutual Information (oNMI), an extension of the information theoretic Normalized Mutual Information~\cite{ami} using a set matching framework.
The oNMI (and NMI) requires an adjustment term to normalize values across datasets, and McDaid et al.~\cite{onmi2} note that the original adjustment leads to some unintuitive behaviour, and suggests an alternative.
In fact, there are 5 possible adjustment terms and, similar to the previously mentioned null-models, any choice must be carefully considered.
In our experiments, we test both suggested adjustments, referring to that of Lancichinetti et al.~\cite{onmi} as $\mathrm{oNMI_{LFK}}$ and that of McDaid et al.~\cite{onmi2} as $\mathrm{oNMI_{MGH}}$.

Finally, by shifting the focus from clusters to objects, the Element-Centric Similarity (ECS)~\cite{ecc} is a new class of method.
The method constructs a \textit{cluster-induced element graph} by placing a directed and non-uniformly weighted clique over each cluster (with weights chosen so that the degree of each object is split evenly across its clusters), and comparing the personalized pagerank vectors of each object.
This method is intuitive and allows for interesting object-level agreement analysis, but  has several drawbacks for comparing clusterings.
Critically, the ECS may assign maximum similarity to non-identical clusterings; for example, a single cluster that contains all objects compared to any uniform Balanced Incomplete Block Design (BIBD) over the objects will achieve maximum similarity.
A BIBD is a clustering where each pair of objects appears in the same number of clusters and each cluster has the same size.
These symmetries create a cluster induced element graph that is a clique with equal weights, which is identical (up to a constant scaling, which is ignored by the personalized pagerank computation) to the cluster-induced element graph of a single cluster containing all objects.
Furthermore, the ECS is computationally difficult for large datasets with overlapping clusters, and the natural extension would treat outliers the same as clusters of size one.

Finally, we briefly mention the \textit{problem of matching}~\cite{vi} that affects set-matching measures~\cite{setmatching} like the oNMI and our proposed measure.
The problem of matching refers to the indifference of a measure to the part of each cluster that is unmatched.
In particular, if the unmatched portion of a cluster is concentrated in one other cluster, then it is somehow less disrupted than if the unmatched portion was spread across many clusters.
However, while it may be less disrupted in an edit distance sense (and only if moving multiple objects between clusters costs less than moving each object individually), we disagree that intuition dictates that these disruptions should not be equally dissimilar to the original. 
In fact, by measuring the ability of each cluster to find a match in the other clustering, a set-matching measure provides interesting cluster level information that could be useful for detecting where the clusterings disagree, possibly complementing the object level information available from the ECS~\cite{ecc}.

\section{The Method}
For a set of objects $X = \{x_1, x_2, \ldots, x_n\}$, we define a \textit{clustering} as set of non-empty subsets of $X$.
We call each element of a clustering a \textit{cluster}.
Note that with this definition, each cluster in a clustering is distinct, and the empty set cannot be a cluster.
However, we allow the clustering itself to be the empty set.

To define methods for comparing clusterings $\cCone$ and $\cCtwo$, we first define some methods for comparing two individual clusters.
A cluster similarity function is a map between two subsets of $X$ and a value in $[0,1]$ that measures some notion of similarity (with $1$ denoting maximum similarity).
The standard precision ($p$), recall ($r$), and $F_1$ similarity functions between two clusters $C_i, C_j \subseteq X$ are defined as:
\begin{gather*}
    p(C_i, C_j) = \frac{|C_i \cap C_j|}{|C_i|},\\
    r(C_i, C_j) = \frac{|C_i \cap C_j|}{|C_j|},\\
    F_1(C_i, C_j) = \frac{2}{1 / p(C_i, C_j) + 1 / r(C_i, C_j)}.
\end{gather*}
Precision and recall both have nice interpretations as the probability of being in a cluster conditional on being in other.
The $F_1$ score is very popular to combine the precision and recall into a single score, but has gathered criticism for being pragmatic but ultimately unjustified~\cite{F-review}.
In~\cite{F*}, the authors argue that another combination of precision and recall, which they call the $F^*$ score, is more interpretable.
Furthermore, they show that $F^*$ is a monotonic transformation of $F_1$, which means any ranking of similarities using $F_1$ will be the same when using $F^*$.
The $F^*$ score is equivalent to the well-known Jaccard index for set similarity, and is defined as
\begin{equation*}
F^*(C_i, C_j) = \frac{F_1(C_i, C_j)}{2 - F_1(C_i, C_j)} = \frac{|C_i \cap C_j|}{|C_i \cup C_j|}.
\end{equation*}
For the rest of the paper, we will use $F^*$.
It could be replaced by any of the above (or other) set similarity measures, but it is easy to work with and will serve as a general comparison measure.

Next, we define methods for measuring how well each cluster in $\cCone$ can find a matching cluster in $\cCtwo$.
For a cluster $C_i \in \cCone$, we find a matching cluster by taking the maximum similarity when comparing to each cluster in $\cCtwo$:
\begin{equation*}
    F^*(C_i, \cCtwo) = \max_{C_j \in \cCtwo} \big\{ F^*(C_i, C_j) \big\}.
\end{equation*}
We define the edge case $F^*(C_i, \emptyset) = 0$ since $C_i$ cannot find a cluster with any similarity.

Then, to summarize the match quality from each cluster, we compute the weighted average similarity.
\begin{equation*}
    \hat{F}^{*}_w(\cCone, \cCtwo) = \sum_{C_i \in \cCone} \frac{|C_i|}{\sum_{C_k \in \cCone} |C_k|} F^*(C_i, \cCtwo).
\end{equation*}
This function is a measure of how well each cluster in $\cCone$ can find a match in $\cCtwo$, but not necessarily the reverse.
This asymmetric similarity measure could be useful in domains where the performance of a clustering algorithm should be measured by its ability to find clusters that actually exist, but where we are not concerned with complete coverage; for example, in some instances of exploratory data analysis.

Of course, we can also find the best match for each cluster from $\cCtwo$ in $\cCone$ with $\hat{F}^*_w(\cCtwo, \cCone)$.
Finally, we use a linear combination with equal weights to combine both directions into a single symmetric similarity score;
\begin{equation*}
    F^*_w(\cCone, \cCtwo) = 0.5 \hat{F}^{*}_w(\cCone, \cCtwo) + 0.5\hat{F}^{*}_w(\cCtwo, \cCone).
\end{equation*}

The similarity measure $F^*_w$ serves as a simple score to evaluate how similar the clusters of two clusterings are, but crucially does not consider objects that are not part of any cluster.
We call an object $x_i \in X$ an \textit{outlier} with respect to a clustering $\cC$ if $x \not \in \bigcup_{C \in \cC} C$.
For cases where most of the objects are not outliers, $F^*_w$ might be acceptable, but we can add a term for matching the outliers between clusterings.
Let $O^{(1)}$ be the set of outliers with respect to $\cCone$, and likewise for $O^{(2)}$ with $\cCtwo$.
The outlier aware versions of $\hat{F}^*_w$ and $F^*_w$ are defined as
\begin{equation*}
    \hat{F}^{*}_{wo}(\cCone, \cCtwo) = \frac{|O^{(1)}|}{|X|} F^*(O^{(1)}, O^{(2)}) + \frac{|X \setminus O^{(1)}|}{|X|} \hat{F}^{*}_w(\cCone, \cCtwo)
\end{equation*}
and
\begin{equation*}
    F^*_{wo}(\cCone, \cCtwo) = 0.5\hat{F}^*_{wo}(\cCone, \cCtwo) + 0.5\hat{F}^*_{wo}(\cCtwo, \cCone).
\end{equation*}

\section{Properties}
In this section, we analyze some basic properties of the proposed $F^*_{wo}$ score, and establish that it is reasonable in matching the intuition and general use-case of comparing clustering algorithms.
First, we state a few properties that are absolutely required for an intuitive similarity measure, the proofs of which are elementary and are omitted for brevity.

\begin{prop}
    The proposed measure $F^*_{wo}$ has the following properties.
    \begin{enumerate}
        \item $F^*_{wo}$ is label invariant.
        \item $F^*_{wo}(\cCone, \cCtwo) \in [0,1]$, and $F^*_{wo}(\cCone, \cCtwo) = 1$ if and only if $\cCone = \cCtwo$.
        \item $F^*_{wo}$ is symmetric.
        \item Single object clusters are not the same as outliers.
    \end{enumerate}
\end{prop}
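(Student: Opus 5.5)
The plan is to argue each item directly from the definitions. The common tool is that every quantity involved is a weighted average of $F^*$ values with nonnegative weights summing to $1$. Before starting, I would fix the degenerate cases, since they matter in item~2.
\begin{itemize}
\item If $O^{(1)}=\emptyset$, the term $F^*(O^{(1)},O^{(2)})$ carries weight $|O^{(1)}|/|X|=0$. So any convention for $F^*(\emptyset,\emptyset)$ is harmless; I take it to be $0$.
\item If $\cCone=\emptyset$, then $X\setminus O^{(1)}=\emptyset$, so the undefined $\hat{F}^*_w(\emptyset,\cdot)$ also has weight $0$.
\item Whenever $\cCone\neq\emptyset$, the weights $|C_i|/\sum_k|C_k|$ are strictly positive, because clusters are nonempty by definition.
\end{itemize}

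\textbf{Item 1 (label invariance).} A clustering is a \emph{set} of clusters, so no cluster labels appear in the definition at all. Sums and maxima over $\cCone$ and $\cCtwo$ are independent of any enumeration. If one also means relabelling the objects by a bijection $\sigma$ of $X$, I would note that $|A\cap B|$, $|A\cup B|$, $|A|$ and the outlier sets all commute with $\sigma$. Hence every term of $F^*_{wo}$ is unchanged.

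\textbf{Item 3 (symmetry).} This is immediate, since $F^*_{wo}$ is defined as the equal-weight average of the two directions.

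\textbf{Item 2, bounds.} Since $F^*(A,B)=|A\cap B|/|A\cup B|\in[0,1]$, each $F^*(C_i,\cCtwo)$ lies in $[0,1]$, including the convention $F^*(C_i,\emptyset)=0$. Then $\hat F^*_w$ is a convex combination of these values. Next, $\hat F^*_{wo}$ is a convex combination of $F^*(O^{(1)},O^{(2)})$ and $\hat F^*_w$, with weights $|O^{(1)}|/|X|+|X\setminus O^{(1)}|/|X|=1$. Finally $F^*_{wo}$ averages two such quantities, so $F^*_{wo}\in[0,1]$.

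\textbf{Item 2, equality.} The direction $\cCone=\cCtwo\Rightarrow F^*_{wo}=1$ is clear. In that case $O^{(1)}=O^{(2)}$, and each $C_i$ matches itself with $F^*=1$.

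For the converse, suppose $F^*_{wo}=1$. Then both directions satisfy $\hat F^*_{wo}=1$, because each is at most $1$. A convex combination of numbers in $[0,1]$ equals $1$ only if every term with positive weight equals $1$. This gives two conclusions.
\begin{itemize}
\item If $O^{(1)}\neq\emptyset$, then $F^*(O^{(1)},O^{(2)})=1$, i.e.\ $O^{(1)}=O^{(2)}$.
\item If $\cCone\neq\emptyset$, then every $C_i\in\cCone$ has some $C_j\in\cCtwo$ with $|C_i\cap C_j|=|C_i\cup C_j|$. That means $C_i=C_j\in\cCtwo$, so $\cCone\subseteq\cCtwo$.
\end{itemize}
The same argument in the other direction gives $\cCtwo\subseteq\cCone$ whenever $\cCtwo\neq\emptyset$.

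The main obstacle is the edge case where one clustering is empty, since the inclusion argument is then vacuous. Say $\cCone=\emptyset$, so $O^{(1)}=X\neq\emptyset$ (assuming $X$ is nonempty). From the first direction, $O^{(2)}=O^{(1)}=X$, hence $\cCtwo=\emptyset=\cCone$. The case $\cCtwo=\emptyset$ is symmetric. If both are nonempty, the two inclusions give equality.

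\textbf{Item 4 (single-object clusters versus outliers).} I would read this as the claim that replacing an outlier $x$ by the cluster $\{x\}$ changes the clustering as far as the measure can tell. This follows at once from item~2: the two clusterings differ as sets, so their score is strictly less than $1$. As an illustration, take $X=\{x\}$, $\cCone=\{\{x\}\}$ and $\cCtwo=\emptyset$.
\begin{itemize}
\item First direction: $O^{(1)}=\emptyset$, and $F^*(\{x\},\emptyset)=0$.
\item Second direction: $O^{(2)}=X$ and $F^*(X,\emptyset)=0$.
\end{itemize}
So $F^*_{wo}=0$: the measure treats a singleton cluster and an outlier as maximally different here.
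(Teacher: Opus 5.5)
Your proof is correct. The paper gives no argument for this proposition; it calls the proofs elementary and omits them. So there is no competing route to compare, and what you wrote is exactly the omitted verification.

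You also handle the only delicate points correctly.
\begin{itemize}
\item The expressions $F^*(\emptyset,\emptyset)$ and $\hat{F}^*_w(\emptyset,\cdot)$ always carry weight $|O^{(1)}|/|X|=0$ or $|X\setminus O^{(1)}|/|X|=0$, so they cannot affect the value. Your convention $F^*(\emptyset,\emptyset)=0$ also matches the paper's Python implementation.
\item In the converse of item~2 with $\cCone=\emptyset$, you route through the outlier term to force $O^{(2)}=X$, and hence $\cCtwo=\emptyset$.
\item You read item~4 as saying that turning an outlier $x$ into the cluster $\{x\}$ lowers the similarity strictly below $1$, which then follows from item~2. This is a sensible formalization of the paper's informal claim that $F^*_{wo}$, unlike the natural extensions of earlier measures, does not identify outliers with singleton clusters.
\end{itemize}
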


The first property is the essential difference between clustering and multi-class classification tasks.
Implementations generally assign a label to each cluster (often using the natural numbers $1$ to $|\cC|$), but such a labelling is arbitrary and should not affect the similarity to another clustering.
The second property is called \textit{normalization}~\cite{ami}, and it is defined as a bounded range for the measure independent of the particular clusterings under consideration.
A normalized measure means that the output is on the same scale regardless of the clusterings, and scores are comparable even if the clusterings have a different number of clusters or objects.
The final property is the key differentiation of our method with those presented in Section~\ref{sec:related}, in the sense that the obvious extensions of the existing methods would treat outliers identically to single object clusters.

The complement of some similarity measures (that is, $1 - s$ for a similarity measure $s$) for partitions are metrics, and it has been argued that a metric structure better matches our intuitions about similarity \cite{vi}.
However, in practice, many of the most popular similarity measures, such as the ARI~\cite{ari}, NMI~\cite{nmi} (using the arithmetic mean, the default in the popular python machine learning library scikit-learn), and AMI~\cite{ami}, in addition to the Omega~\cite{omega}, oNMI~\cite{onmi}, and ECS~\cite{ecc} from Section~\ref{sec:related}, are not the complement of a metric.
Thus, even though being the complement of a metric is desirable, we conclude that this property is not absolutely necessary for a pragmatic similarity measure.

\begin{prop} \label{prop:metric}
    The complement of the proposed measure, $1 - F^*_{wo}$, is not a metric.
\end{prop}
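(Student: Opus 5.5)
The previous proposition already gives symmetry and $1 - F^*_{wo}(\cC,\cC') = 0$ if and only if $\cC = \cC'$, so the only metric axiom that can fail is the triangle inequality. The plan is to give an explicit small counterexample: three clusterings $\cCone, \cCtwo, \cCthree$ with
\[
1 - F^*_{wo}(\cCone,\cCtwo) > \bigl(1 - F^*_{wo}(\cCone,\cCthree)\bigr) + \bigl(1 - F^*_{wo}(\cCthree,\cCtwo)\bigr).
\]
To keep the computation short, I choose clusterings that cover every object. Then $O^{(1)} = O^{(2)} = O^{(3)} = \emptyset$ and $|X \setminus O^{(i)}| = |X|$. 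Hence $F^*_{wo} = F^*_w$ and the outlier term plays no role.

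Take $X = \{a,b\}$ and
\[
\cCone = \{\{a,b\}\}, \qquad \cCtwo = \{\{a\},\{b\}\}, \qquad \cCthree = \{\{a,b\},\{a\}\}.
\]
The idea is that $\cCthree$ contains the single cluster of $\cCone$ exactly, so it is very close to $\cCone$. It also contains $\{a\}$, which gives it a perfect match for one cluster of $\cCtwo$. So $\cCthree$ acts as a cheap intermediate between two clusterings that are far apart.

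The routine computations of the weighted best-match averages are as follows.
\begin{itemize}
\item Between $\cCone$ and $\cCthree$: we get $\hat F^*_w(\cCone,\cCthree) = 1$ and $\hat F^*_w(\cCthree,\cCone) = \tfrac23 \cdot 1 + \tfrac13 \cdot \tfrac12 = \tfrac56$. So the distance is $\tfrac1{12}$.
\item Between $\cCthree$ and $\cCtwo$: we get $\hat F^*_w(\cCthree,\cCtwo) = \tfrac23 \cdot \tfrac12 + \tfrac13 \cdot 1 = \tfrac23$ and $\hat F^*_w(\cCtwo,\cCthree) = \tfrac12(1 + \tfrac12) = \tfrac34$. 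So the distance is $1 - \tfrac{17}{24} = \tfrac7{24}$.
\item Between $\cCone$ and $\cCtwo$: both directed scores equal $\tfrac12$. So the distance is $\tfrac12$.
\end{itemize}
Since $\tfrac1{12} + \tfrac7{24} = \tfrac38 < \tfrac12$, the triangle inequality fails, and $1 - F^*_{wo}$ is not a metric.

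The only real obstacle is finding a configuration where the asymmetric best-match maxima make an intermediate clustering cheap to reach from both ends. The nested construction above, which adds a sub-cluster alongside the full cluster, achieves this. Everything after that is arithmetic. One should double-check that each $\cC^{(i)}$ is a valid clustering (its clusters are distinct and non-empty) and that no outliers arise.
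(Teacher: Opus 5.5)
Your counterexample is correct (all six directed scores check out, giving distances $\tfrac1{12}$, $\tfrac7{24}$, $\tfrac12$). It takes essentially the paper's route: an outlier-free triple whose intermediate clustering is the whole ground set together with a singleton, which breaks the triangle inequality. The paper runs the same nested construction on $X=\{1,2,3\}$ with far endpoint $\{\{1\},\{2,3\}\}$, so your two-point example is just a smaller instance, and it likewise shows that $1-F^*_w$ is not a metric.
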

\begin{proof}
    We provide a counterexample for the triangle inequality.
    Let $M(\cCone, \cCtwo) = 1 - F^*_{wo}(\cCone, \cCtwo)$.
    Let $X = \{1,2,3\}$, and consider the following three clusterings:
    \begin{equation*}
        \cCone = \{ \{1,2,3\} \}, \quad \cCtwo = \{ \{1\}, \{1,2,3\} \}, \quad \cCthree =  \{ \{1\}, \{2,3\} \}
    \end{equation*}
    We compute
    \begin{align*}
        M(\cCone, \cCtwo) &= 1 - \left( 0.5\left(1\right) + 0.5\left(\frac{1}{4}\frac{1}{3} + \frac{3}{4}1\right) \right) = \frac{1}{12},\\
        M(\cCtwo, \cCthree) &= 1 - \left( 0.5\left(\frac{1}{4}1 + \frac{3}{4}\frac{2}{3}\right) + 0.5\left(\frac{1}{3}1 + \frac{2}{3}\frac{2}{3}\right) \right) = \frac{17}{72},\\
        M(\cCone, \cCthree) &= 1 - \left( 0.5\left(\frac{2}{3}\right) + 0.5\left(\frac{1}{3}\frac{1}{3} + \frac{2}{3}\frac{2}{3}\right) \right) = \frac{7}{18},
    \end{align*}
    and find that
    \begin{equation*}
        M(\cCone, \cCtwo) + M(\cCtwo, \cCthree) = \frac{1}{12} + \frac{17}{72} = \frac{23}{72} < \frac{7}{18} = M(\cCone, \cCthree),
    \end{equation*}
    and the triangle inequality does not hold.
\end{proof}
\begin{remark}
    Since none of the clusterings used in the proof of Proposition~\ref{prop:metric} contain outliers, this counterexample shows that $1-F^*_w$ is also not a metric.
\end{remark}

Finally, we examine the effect of a small change to one of the clusterings being compared.
\begin{thm}\label{thm:robust}
    Let $\cCone$ and $\cCtwo$ be clusterings of $X$, and let $\tcCone$ be a clustering created from $\cCone$ by moving one object, $x$, into or out of one cluster.
    Let $B = \sum_{C \in \cCtwo} |C|$.
    If $|\cCone| = |\tcCone|$ (no clusters were created or destroyed), say $x$ was added or removed from $C_1 \in \cCone$.
    Let $\gamma = |\{C \in \cCtwo : C \text{ matched with } C_1 \text{ or } x \in C\}|$.
    Then
    \begin{align*}
    &\left| F^*_{wo}(\cCone, \cCtwo) - F^*_{wo}(\tcCone, \cCtwo) \right|\\
    &\qquad = \bigO \left( \frac{1}{|X|} \max \left\{ 1, \frac{|X \setminus O^{(2)}| \gamma}{B} \right\} \right).
    \end{align*}
    Otherwise, let $\Gamma = |\{C \in \cCtwo : x \in C\}|$, and
    \begin{equation*}
            \left| F^*_{wo}(\cCone, \cCtwo) - F^*_{wo}(\tcCone, \cCtwo) \right| = \bigO \left( \frac{1}{|X|} \max \left\{ 1, \frac{|X \setminus O^{(2)}| \Gamma}{B} \right\} \right).
    \end{equation*}
\end{thm}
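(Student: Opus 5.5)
We split $F^*_{wo}(\cCone,\cCtwo)-F^*_{wo}(\tcCone,\cCtwo)$ into its two halves, $\hat F^*_{wo}(\cCone,\cCtwo)-\hat F^*_{wo}(\tcCone,\cCtwo)$ and $\hat F^*_{wo}(\cCtwo,\cCone)-\hat F^*_{wo}(\cCtwo,\tcCone)$, and bound each by the triangle inequality. Each half is a sum of an outlier term and a weighted cluster term. The basic tool is an elementary perturbation bound: adding or removing one element to a set changes $|C_i\cap C_j|$ and $|C_i\cup C_j|$ by at most one each, so $|F^*(C_i,C_j)-F^*(C_i',C_j)|\le 1/|C_i\cup C_j|\le 1/\max(|C_i|,|C_j|)$. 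Since a maximum of functions each moving by at most $\delta$ moves by at most $\delta$, the same bound holds for $F^*(C_i,\cCtwo)$.

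\textbf{Outlier terms.} $O^{(1)}$ changes by at most the single element $x$, so the weights $|O^{(1)}|/|X|$ and $|X\setminus O^{(1)}|/|X|$ change by at most $1/|X|$. Also $|O^{(1)}|\cdot|F^*(O^{(1)},O^{(2)})-F^*(\tilde O^{(1)},O^{(2)})|=\bigO(1)$ by the perturbation bound. Hence both outlier contributions, in both directions, are $\bigO(1/|X|)$. The remaining issue is the weighted cluster averages, which carry the prefactor $|X\setminus O^{(\cdot)}|/|X|$.

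\textbf{Direction $\cCone\to\cCtwo$.} We write $\hat F^*_w(\cCone,\cCtwo)=A^{-1}\sum_i |C_i| F^*(C_i,\cCtwo)$ with $A=\sum|C_i|$. Moving $x$ changes $A$ by one and changes only the summand for $C_1$ (or creates or destroys a singleton $\{x\}$). Its weighted contribution $|C_1|F^*(C_1,\cCtwo)$ changes by $\bigO(1)$ by the perturbation bound, and renormalising by $A\pm1$ costs $\bigO(1/A)$. Since $|X\setminus O^{(1)}|\le A$, this direction gives $\bigO(1/|X|)$ after multiplying by the prefactor. The singleton case is the same.

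\textbf{Direction $\cCtwo\to\cCone$, the main obstacle.} Here the weights $|C_j|/B$ are fixed, but many $F^*(C_j,\cCone)$ may change. We bound the change for each $C_j\in\cCtwo$ by $\bigO(1/|C_j|)$, via the perturbation bound applied to whichever pair achieves the maximum. After weighting, the contribution of $C_j$ is $\bigO(1/B)$. The total is therefore $\bigO(k/B)$, where $k$ counts the $C_j$ whose best-match value can change; multiplying by $|X\setminus O^{(2)}|/|X|$ gives the claimed $\frac{|X\setminus O^{(2)}|}{|X|}\cdot\frac{k}{B}$. It remains to show $k\le\gamma$, respectively $k\le\Gamma$. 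For $C_j\not\ni x$, adding or removing $x$ from $C_1$ can only lower $F^*(C_j,C_1)$ if $x$ enters, or raise it if $x$ leaves. The maximum can then change only if $C_1$ was, or becomes, the best match of $C_j$. This is the step where the definition of ``matched with $C_1$'' must be read to cover both before and after; I would adopt that reading. For $C_j\ni x$ the value may change, and these are counted. In the ``otherwise'' case a singleton $\{x\}$ is created or destroyed, and $F^*(C_j,\{x\})>0$ only if $x\in C_j$; all other pairs are unchanged, so $k\le\Gamma$. Combining all terms with the $\max\{1,\cdot\}$ yields both bounds.
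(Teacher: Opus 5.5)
Your proposal is correct and follows essentially the paper's route. The paper uses a one-element perturbation bound for $F^*$ (Lemma~\ref{lem:one}), an $\bigO(1/A)$ bound on the $\cCone\to\cCtwo$ weighted average absorbed via $|X\setminus O^{(1)}|\le A$ (Lemma~\ref{lem:two}), and a count of at most $\gamma$ (resp.\ $\Gamma$) clusters of $\cCtwo$ whose best match can move, each contributing at most $1/B$ (Lemma~\ref{lem:three}), plus $\bigO(1/|X|)$ control of the outlier terms; the only differences are organisational. You state the perturbation bound for addition and removal at once and read ``matched with $C_1$'' as covering both the before and after configurations, whereas the paper proves only the addition case, with $C_1$ always the set not containing $x$, and gets removal by swapping $\cCone$ and $\tcCone$ in its cases 1a--2b.
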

The proof can be found in Appendix~\ref{app:proof}.
With Theorem~\ref{thm:robust}, we confirm that, subject to some mild constraints on the clusterings, moving one element into or out of a cluster has only a small effect on the similarity to a reference clustering.
In both cases, the right-hand side can be improved to $\bigO(\frac{1}{|X|})$ with additional mild constraints on $\cCtwo$ that prevent an object from being in too many clusters.
Furthermore, in the proof of Theorem~\ref{thm:robust}, we use the following Lemma that is of independent interest.

\begin{lem}\label{lem:two}
    Let $A = \sum_{C \in \cCone} |C|$.
    \begin{equation*}
        \left| \hat{F}^*_w(\cCone, \cCtwo) - \hat{F}^*_w(\tcCone, \cCtwo) \right| = \bigO\left(\frac{1}{A}\right).
    \end{equation*}
\end{lem}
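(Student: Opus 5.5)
The plan is to write $\hat{F}^*_w(\cCone,\cCtwo) = S/A$ with $S = \sum_{C \in \cCone} |C|\, F^*(C,\cCtwo)$, and likewise $\hat{F}^*_w(\tcCone,\cCtwo) = \tilde S/\tilde A$ with $\tilde A = \sum_{C \in \tcCone}|C| = A \pm 1$. The difference of two ratios is split by the triangle inequality:
\begin{equation*}
\left|\frac{S}{A} - \frac{\tilde S}{\tilde A}\right| \le \frac{|S - \tilde S|}{\tilde A} + S\left|\frac{1}{A} - \frac{1}{\tilde A}\right| = \frac{|S-\tilde S|}{\tilde A} + \frac{S}{A}\cdot\frac{1}{\tilde A}.
\end{equation*}
Since $F^* \le 1$, we have $S/A \le 1$. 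So it suffices to show $|S - \tilde S| = \bigO(1)$ and $\tilde A \ge A/2$. For the latter, if $A \ge 2$ then $\tilde A \ge A-1 \ge A/2$. If $A \le 1$ the claim is trivial because both scores lie in $[0,1]$ (and $\tcCone = \emptyset$ gives a score of $0$).

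Next I bound $|S-\tilde S|$. A single move changes at most one summand, namely the one for the affected cluster.

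If a cluster is created or destroyed, it is the singleton $\{x\}$. Its summand is $1\cdot F^*(\{x\},\cCtwo) \in [0,1]$, so $|S-\tilde S|\le 1$.

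Otherwise $C_1$ becomes $C_1' = C_1 \triangle \{x\}$, and I show that $g(C) = |C|\,F^*(C,\cCtwo)$ changes by at most $2$.
\begin{itemize}
\item Since $g(C) = \max_{D \in \cCtwo} |C|\,J(C,D)$, where $J$ is the Jaccard index, and a maximum of functions moves by at most the largest change of any one of them, it is enough to fix $D$.
\item Then
\begin{equation*}
\bigl||C'|J(C',D) - |C|J(C,D)\bigr| \le \bigl||C'|-|C|\bigr|\, J(C',D) + |C|\,\bigl|J(C',D)-J(C,D)\bigr|.
\end{equation*}
The first term is at most $1$.
\item For the second term, adding or removing one element changes $|C\cap D|$ and $|C \cup D|$ by at most one each. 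A short case check ($x\in D$ or not) gives $|J(C',D)-J(C,D)| \le 1/\max(|C\cup D|,|C'\cup D|) \le 1/|C|$. So the second term is also at most $1$.
\end{itemize}
Combining, $|S-\tilde S|\le 2$, and hence the total difference is at most $2/\tilde A + 1/\tilde A \le 6/A$.

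The step I expect to be the main obstacle is this per-cluster Jaccard perturbation bound, specifically checking the $1/|C|$ estimate in each of the four cases (add or remove, $x\in D$ or not). In the removal case one must use $|C\cup D| \ge |C|$ with $C$ being the larger set, not $C'$.

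One further subtlety concerns the definition of a clustering as a set. If adding $x$ to $C_1$ made it coincide with an existing cluster, the two would collapse and $A$ would drop by $|C_1|$. I will read ``moving one object'' as excluding such collisions, as in the statement of Theorem~\ref{thm:robust}, which distinguishes only the cases where a singleton cluster is created or destroyed. With that reading, no further cases arise.
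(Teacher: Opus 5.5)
Your proof is correct and is essentially the paper's argument. Both rest on the per-cluster Jaccard perturbation bound $|F^*(C_1,C_j)-F^*(\tilde{C},C_j)| \le 1/|C_1|$ (the paper's Lemma~\ref{lem:one}, which you re-derive and whose passage through the maximum over $\cCtwo$ you make explicit), combined with bookkeeping of the normalizer $A$ versus $A\pm 1$, and both reach the same bound $3/\tilde A$. The differences are cosmetic. You split the error into a numerator change and a denominator change rather than into the affected-cluster term and the remaining terms. You also handle removal and singleton creation/destruction inside the lemma, whereas the paper treats only addition there and covers the singleton case in the preliminaries to Cases 2a and 2b.
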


\begin{remark}
    Lemma~\ref{lem:two} implies that if $A$ is sufficiently large, then $\hat{F}^*_w(\cCone, \cCtwo)$ is also robust to a small change in $\cCone$.
    Thus, this is a good choice for evaluating exploratory data analysis algorithms when comparing to a ground truth without outliers.
\end{remark}

\section{Experiments}\label{sec:experiments}

\subsection{Intuitive Properties}
In this section, we recreate and extend an experiment from Gates et al.~\cite{ecc} to check for common biases in the behaviour of $F^*_{wo}$.
The original experiment consists of three series of clustering comparisons that have an expected intuitive behaviour.
None of the scenarios have outliers nor overlapping clusters, but a good measure should still match our intuitions for partitions.

For the first experiment, a clustering is created from $1024$ objects with 32 clusters of each of size $32$.
The second clusterings starts as a copy of the first, but a fraction of its objects have their cluster shuffled.
We expect that as the fraction shuffled increases, the similarity should be monotone decreasing to some non-zero value.

In the second scenario, the first clustering is again $1024$ objects split into $32$ clusters of size $32$.
The second clustering starts with as a different clustering (randomly selected) with $32$ clusters of size $32$, and evolves by reassigning a random object to a new cluster chosen proportional to the cluster sizes.
As the process proceeds, we expect the cluster sizes to become more heterogeneous (a few large clusters, many small clusters) and we again expect the similarity to be monotone decreasing to some non-zero value.

Finally, we test for bias in the number of clusters.
The first clustering has $1024$ objects split into $8$ clusters of size $128$, and the second clustering is chosen uniformly from partitions of $1024$ objects into $k$ equally sized clusters.
As $k$ increases, we expect the similarity to be monotone decreasing, and for the similarity to be almost $0$ in the extreme case of $1024$ clusters of size $1$.

As seen in Figure \ref{fig:experiment}, the behaviour of $F^*_{wo}$ matches our intuition in each experiment.

\begin{figure*}[!t]
    \centering
    \includegraphics[width=0.9\linewidth]{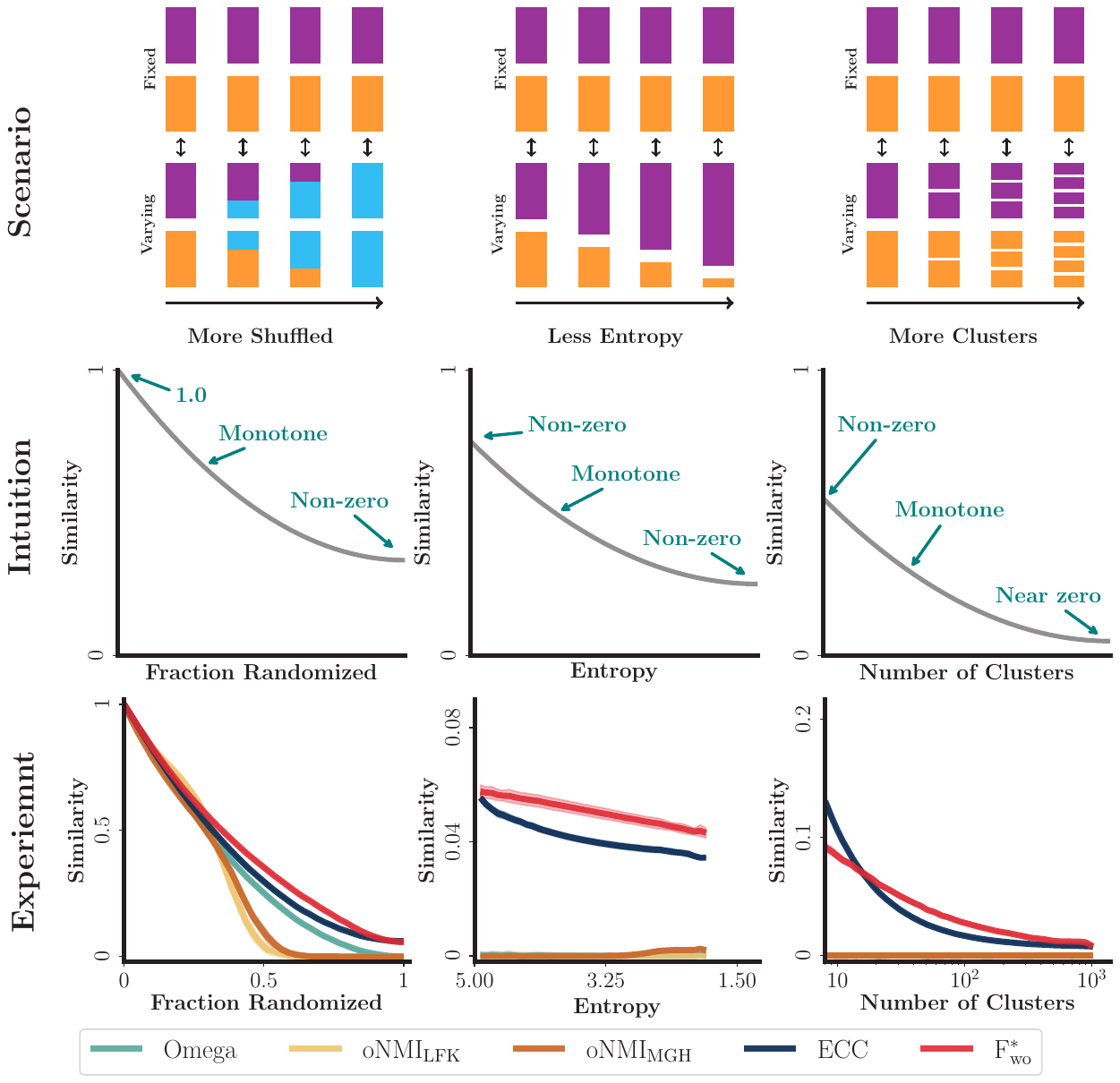}
    \caption{
        Recreation of the experiment in Figure 2 of \cite{ecc} with $F^*_{wo}$.
        Each column represents one scenario, including a visual of the scenario, the intuitive behaviour of a similarity measure in this scenario, and the actual behaviour of several measures in each row respectively.
        Each line is the average of $100$ simulations and the shaded region (usually too small to appear) covers plus or minus one standard deviation.
        In each scenario, the proposed $F^*_{wo}$ measure matches our intuition.
    }
    \label{fig:experiment}
\end{figure*}

Next, we test two more scenarios that involve clusterings with either outliers or overlapping clusters.
To generate clusterings, we use a method from the recently proposed \textit{Artificial Benchmark for Community Detection with Outliers and Overlaps} (ABCD+o$^2$)~\cite{abcdoo}.
This method leverages a low-dimensional geometric reference layer to create correlations between the overlapping clusters that are similar to those found in real world graph datasets.

As input, the method takes a set of initial cluster sizes $p_1, p_2, \dots p_m$ (such that for $n$ objects, $\sum_{1 \leq i \leq m} p_i = n$) and an expansion factor $\eta$ that corresponds to the average number of clusters per object.
First, the method creates a partition $P_1, P_2, \dots, P_m$ with the given sizes $p_1, p_2, \dots p_m$.
Each object is assigned an i.i.d.\ uniform random vector in the unit circle.
Next, for each $1 \leq i \leq m$, the object with the vector furthest from the origin that has not already been assigned to a part become the seed of part $i$.
The part $P_i$ is set as the seed object and its $p_i - 1$ nearest neighbours that have also not yet been assigned to a part.
In the second stage, each part is expanded around its center of mass until it reaches size $p_i \cdot \eta$, where $\eta$ is an input parameter governing the average number of clusters to which an object belongs.
For an exact description of the method, we refer to~\cite{abcdoo}.

In the fourth scenario, we test each comparison measure on overlapping clusters generated with the method described above.
For $25$ samples, we generate an underlying geometric layer and initial partition, and then compare various values of $\eta$ to a reference clustering where $\eta = 3$.
Since the second stage of the algorithm is deterministic, we expect perfect similarity when $\eta = 3$ and decreasing similarity as $|\eta - 3|$ increases.

In the fifth scenario, we reverse second stage of the method and shrink each cluster around its center of mass (this can be thought of as setting $\eta < 1$).
Since we start with a partition, there will be outliers, but no overlapping clusters.
We compare to a reference clustering corresponding to $\eta = 0.5$.
This time we test two out of five measures, as only $F^*_{wo}$ is explicitly defined for outliers, and ECS can accommodate outliers by treating them as isolated vertices in the cluster-induced element graph.

\begin{figure}[t]
    \centering
    \includegraphics[width=0.8\linewidth]{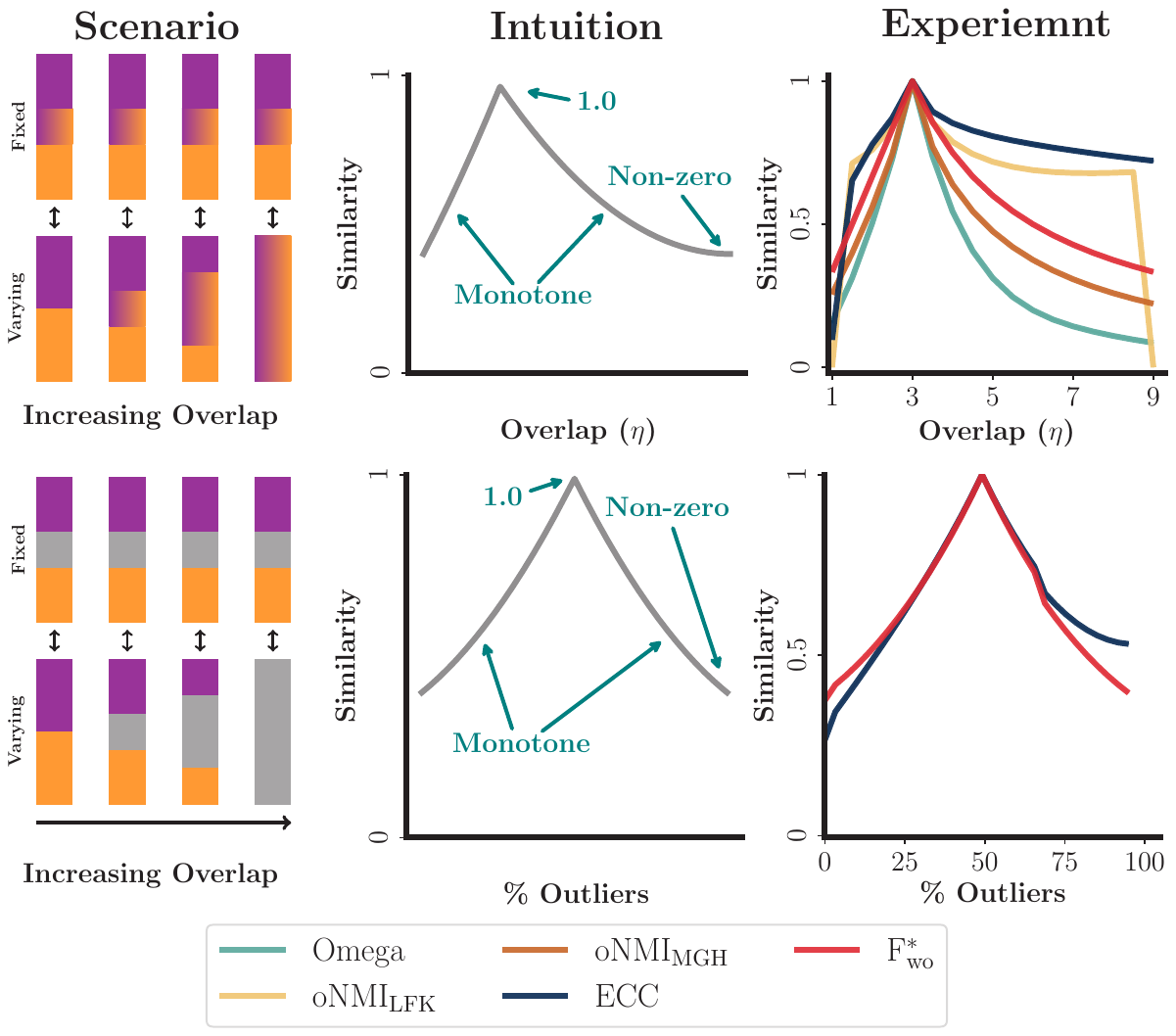}
    \caption{
        Comparing comparison measure behaviour on intuitive scenarios when overlaps or outliers are present.
        The top row is the scenario with overlapping clusters, and the second row has outliers; the columns, from left to right, show a visual of the scenario, the intuitive behaviour, and the actual behaviour of each method.
    }
    \label{fig:experiment2}
\end{figure}

Once again, as seen in Figure \ref{fig:experiment2}, $F^*_{wo}$ behaves as intuition would suggest.
The ECS also follows the intuition in both scenarios, although the similarity for highly overlapping clusterings (large $\eta$) is rather high.

\subsection{Graph Aware Clustering Comparison}
Since overlapping clusters are often necessary to fully capture the communities in complex networks~\cite{kaminski2021mining}, we provide a short discussion about incorporating the structure of the graph when comparing graph clusterings.
In particular, Poulin and Th\'{e}berge~\cite{gam} propose several graph-aware extensions of popular partition measures by transforming a partition of the vertices to a binary edge classification (inter-part versus intra-part).
In the case of overlapping clusters, we cannot use a binary edge classification, but we can create an edge clustering from a node clustering.
For a graph $G$ with vertices $V$ and edges $E$, and a clustering $\cC$ of the vertices, we define the edge clustering induced by $\cC$ as
\begin{equation*}
    \cE_G(\cC) = \{ \{uv \in E: u,v \in C\} \: \forall \: C \in \cC\}.
\end{equation*}
We can also define the vertex clustering induced by an edge clustering, as is commonly used by edge clustering algorithms~\cite{ahn2010},
\begin{equation*}
    \cC_G(\cE) = \{ \{v \in V: \exists \: uv \in C\} \: \forall \: C \in \cE\}.
\end{equation*}
When evaluating the performance of a graph clustering algorithm, we can compare clusterings from both the vertex and edge perspectives.
However, as could be the case when evaluating a linkage based edge clustering algorithm~\cite{ahn2010,hslc}, an edge cluster may span a vertex cluster while only covering a small fraction of the internal edges.
To remedy this, we can define the closure of an edge clustering $\bar{\cE}_G$ to add these internal edges back to the edge cluster, which can be computed as $\bar{\cE}_G = \cE_G(\cC_G(\cE))$.
When using a graph-aware measure for comparing an edge clustering algorithm to a ground truth vertex clustering, we recommend using the closed edge clustering since the internal edges will be included in the edge clustering induced by the ground truth.

\begin{figure}[!t]
    \centering
    \includegraphics[width=0.8\linewidth]{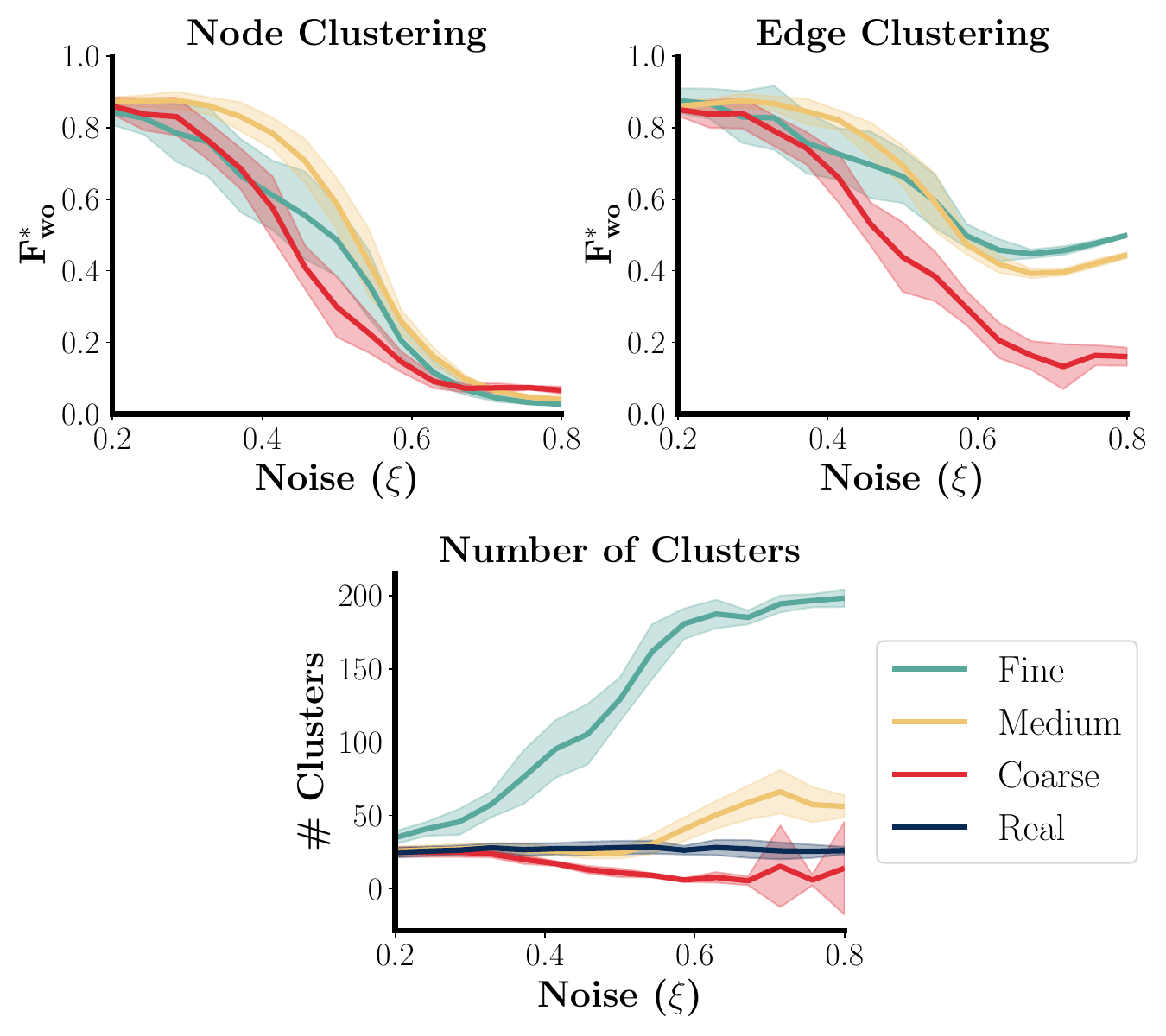}
    \caption{
        An experiment comparing graph aware clustering comparison, similar to that of Figure 5 from \cite{gam}.
        We use three resolutions ($0.5$, $1.0$, and $3.75$ shown in red, green, and yellow respectively) of the Leiden clustering algorithm on ABCD graphs with 2000 vertices and show, the similarity of each detected vertex clustering to the ground truth vertex clusters, the similarity of the detected induced edge clustering to the ground truth induced edge clusters, and the number of clusters
        The solid line represents the average over 10 samples and the shaded region covers plus or minus one standard deviation.
    }
    \label{fig:gam}
\end{figure}

In Figure~\ref{fig:gam}, we evaluate three resolutions of the Leiden~\cite{leiden} graph clustering algorithm on synthetic \textit{Artificial Benchmark for Community Detection} (ABCD)~\cite{abcd} graphs in an experiment similar to that in Figure~5 of~\cite{gam}.
The ABCD model, similar to the popular LFR model~\cite{lfr}, features power-law degrees and community sizes, but has a noise parameter $\xi \in [0,1]$ that controls the proportion of edges in the background graph and allows for a smooth transition from disjoint communities to a completely random graph.
The different resolutions of Leiden produce different cluster granularities, with $3.75$ giving a fine clustering (many small clusters), $0.5$ producing a coarse clustering (few large clusters), and the default resolution of $1.0$ producing clustering with medium-sized clusters.
While we see the same ranking from both vertex and edge perspectives for low noise (up to $\xi \approx 0.5$), the rankings are inverted for very high $\xi$.
In a very noisy graph, most of the edges are inter-cluster, and thus the ground truth edge clustering has many outliers.
Any fine partition will also have mostly inter-cluster edges, so it is similar to the ground truth from an edge clustering perspective, even if the vertex clusterings are not similar.
For evaluating graph clustering algorithms, we recommend analyzing performance from both perspectives to strengthen the conclusions.

\section{Conclusion}
In this note, we defined $F^*_{wo}$, a general method for quantifying the similarity between two clusterings of a common set of objects when the clusterings may have outliers or overlapping clusters.
The score has intuitive behaviour, is easy to compute, and does not exhibit some common biases, which makes it viable for evaluating clustering algorithms.
In Appendix \ref{sec:code}, we provide a simple yet efficient python implementation of $F^*_{wo}$.

\section*{Code Availability}
A python implementation with support for additional input types and graph aware comparisons is available at \href{https://github.com/ryandewolfe33/Fstar}{https://github.com/ryandewolfe33/Fstar}.
The repository also includes code used in the experiments of Section \ref{sec:experiments}.

\section*{Acknowledgements}
R.D.~acknowledges the support of the Natural Sciences and Engineering Research Council of Canada (NSERC) via a CGS-M scholarship.

\bibliographystyle{plainurl}
\bibliography{reference}

\pagebreak
\appendix
\section{Proof of Theorem \ref{thm:robust}} \label{app:proof}

\begin{thm*}[\textbf{\ref{thm:robust}}]
    Let $\cCone$ and $\cCtwo$ be clusterings of $X$, and let $\tcCone$ be a clustering created from $\cCone$ by moving one object, $x$, into or out of one cluster.
    Let $B = \sum_{C \in \cCtwo} |C|$.
    If $|\cCone| = |\tcCone|$ (no clusters were created or destroyed), say $x$ was added or removed from $C_1 \in \cCone$.
    Let $\gamma = |\{C \in \cCtwo : C \text{ matched with } C_1 \text{ or } x \in C\}|$.
    Then
    \begin{align*}
    &\left| F^*_{wo}(\cCone, \cCtwo) - F^*_{wo}(\tcCone, \cCtwo) \right|\\
    &\qquad = \bigO \left( \frac{1}{|X|} \max \left\{ 1, \frac{|X \setminus O^{(2)}| \gamma}{B} \right\} \right).
    \end{align*}
    Otherwise, let $\Gamma = |\{C \in \cCtwo : x \in C\}|$, and
    \begin{equation*}
            \left| F^*_{wo}(\cCone, \cCtwo) - F^*_{wo}(\tcCone, \cCtwo) \right| = \bigO \left( \frac{1}{|X|} \max \left\{ 1, \frac{|X \setminus O^{(2)}| \Gamma}{B} \right\} \right).
    \end{equation*}
\end{thm*}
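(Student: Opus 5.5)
We split $F^*_{wo}(\cCone,\cCtwo)-F^*_{wo}(\tcCone,\cCtwo)$ into its two directional halves and, inside each $\hat F^*_{wo}$, into the outlier term and the weighted cluster term, and bound each piece by the triangle inequality. Moving $x$ into or out of one cluster changes $O^{(1)}$ by at most the single element $x$, so $|O^{(1)}|$ and $|X\setminus O^{(1)}|$ change by at most one. A one-element change to a set changes its Jaccard index with a fixed set by $\bigO(1/\max(|O^{(1)}\cup O^{(2)}|,1))$. Multiplying by the weight $|O^{(1)}|/|X|$, the outlier term in both directions therefore moves by $\bigO(1/|X|)$. The same holds for the change in the coefficients $|X\setminus O^{(i)}|/|X|$, since each multiplies a quantity in $[0,1]$.

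\textbf{Direction $\hat F^*_w(\cCone,\cCtwo)$.} This term carries the coefficient $|X\setminus O^{(1)}|/|X|$. By Lemma~\ref{lem:two} the change in $\hat F^*_w$ is $\bigO(1/A)$. Since $A\ge |X\setminus O^{(1)}|$, the product is $\bigO(1/|X|)$, up to the $\bigO(1/|X|)$ error from the coefficient change, and so this direction contributes the ``$1$'' in the maximum. For the lemma itself I would argue as follows. The weights $|C_i|/A$ change by $\bigO(1/A)$ in total variation. The only similarity that changes is that of the modified cluster $C_1$, or of the created or destroyed singleton $\{x\}$. For the modified cluster, $|C_1|\,F^*(C_1,\cCtwo)/A$ changes by $\bigO(1/A)$, because a one-element change shifts $|C\cap C_1|$ and $|C\cup C_1|$ by at most one, so the max of Jaccard indices, scaled by $|C_1|$, moves by $\bigO(1)$. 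For the singleton, its weight is $1/A$.

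\textbf{Direction $\hat F^*_w(\cCtwo,\cCone)$.} This term has fixed weights $|C|/B$ for $C\in\cCtwo$ and coefficient $|X\setminus O^{(2)}|/|X|$. Only the values $F^*(C,\cCone)$ change, and each lies in $[0,1]$. The task is to count which $C\in\cCtwo$ can have their best match value changed, and by how much. If $x\notin C$, then $F^*(C,C_1)$ changes only through the union, by at most $1/|C|$ relative to the old value, and any created or destroyed singleton $\{x\}$ has Jaccard $0$ with $C$. So $F^*(C,\cCone)$ changes by at most $\bigO(1/|C|)$ only if $C_1$ attains or can attain the maximum, which is the ``matched with $C_1$'' condition. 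If $x\in C$, the change is again $\bigO(1/|C|)$ per cluster. This also covers the created or destroyed singleton $\{x\}$, whose Jaccard with $C$ is $1/|C|$.

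Combining, the change in this direction is at most $\sum_{C \text{ affected}} \frac{|C|}{B}\cdot\bigO\!\left(\frac{1}{|C|}\right)=\bigO(\gamma/B)$, and $\bigO(\Gamma/B)$ in the case where a cluster is created or destroyed, since then no existing cluster is modified. Multiplying by $|X\setminus O^{(2)}|/|X|$ gives the second argument of the maximum. The main obstacle is making the ``matched'' bookkeeping rigorous. A cluster $C$ whose maximizer switches from $C_1$ to another cluster, or from another cluster to $C_1$, still has its max change bounded by the change of $F^*(C,C_1)$, because $|\max_j a_j-\max_j b_j|\le\max_j|a_j-b_j|$. With this observation I would define ``matched with $C_1$'' as $C_1$ being a maximizer before or after the move, and the counting above goes through.
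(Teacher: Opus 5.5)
Your proposal is correct and follows essentially the same route as the paper. You split $F^*_{wo}$ into its two directions and into outlier and cluster terms, bound the $\cCone\to\cCtwo$ cluster term by Lemma~\ref{lem:two} together with $|X\setminus O^{(1)}|\le A$, and bound the $\cCtwo\to\cCone$ term by counting the clusters that contain $x$ or are matched with $C_1$, each contributing at most $\frac{|C|}{B}\cdot\frac{1}{|C|}$ via Lemma~\ref{lem:one}. The differences are minor: you handle the paper's four cases (1a, 1b, 2a, 2b) uniformly, and you make the ``matched with $C_1$'' bookkeeping explicit (a maximizer before or after the move, plus $|\max_j a_j-\max_j b_j|\le\max_j|a_j-b_j|$), which the paper leaves implicit in Lemma~\ref{lem:three} and its appeal to symmetry.
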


First, we prove a few lemmas about some of the intermediate scores, and we will then combine these results to prove Theorem \ref{thm:robust}.
For rest of the section, let $\cCone$ and $\cCtwo$ be clusterings of $X$. 
Let $x \in X \setminus C_1$ for some $C_1 \in \cCone$, and create a new cluster $\tilde{C} = C_1 \cup \{x\}$.
Let $\tcCone$ be the clustering obtained from $\cCone$ by replacing $C_1$ with $\tilde{C}$.
This corresponds to the first scenario in Theorem \ref{thm:robust}.

\begin{lem}\label{lem:one}
    If $C_j \subseteq X$ then
    \begin{equation*}
        \left|F^*(C_1, C_j) - F^*(\tilde{C}, C_j)\right| \leq \frac{1}{|C_1|} \text{ and } \left|F^*(C_1, C_j) - F^*(\tilde{C}, C_j)\right| \leq \frac{1}{|C_j|}.
    \end{equation*}
\end{lem}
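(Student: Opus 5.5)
Write $a = |C_1 \cap C_j|$ and $u = |C_1 \cup C_j|$, so that $F^*(C_1, C_j) = a/u$. Adding $x \notin C_1$ to $C_1$ produces exactly one of two cases.

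\emph{Case $x \in C_j$.} The intersection grows by one and the union does not change. Then $F^*(\tilde{C}, C_j) = (a+1)/u$, and the difference is exactly $1/u$.

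\emph{Case $x \notin C_j$.} The intersection does not change and the union grows by one. Then $F^*(\tilde{C}, C_j) = a/(u+1)$. The difference is
\begin{equation*}
\frac{a}{u} - \frac{a}{u+1} = \frac{a}{u(u+1)} \leq \frac{1}{u+1} \leq \frac{1}{u},
\end{equation*}
using $a \leq u$.

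In both cases the absolute difference is at most $1/u$. Since $u = |C_1 \cup C_j| \geq \max\{|C_1|, |C_j|\}$, both claimed bounds follow at once.

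The plan is only this two-case computation. Two small points need care. First, when $C_j = \emptyset$ we have $u = |C_1| \geq 1$, so no division by zero occurs on the left; the right-hand bound $1/|C_j|$ should be read as vacuous (infinite) in that case. Second, in the case $x \in C_j$ the bound $1/u \leq 1/|C_j|$ uses $u \geq |C_j|$, which holds because $u$ is the size of the original union. I do not expect a real obstacle; the only step to watch is $a/(u(u+1)) \leq 1/u$, which is immediate from $a \leq u$.
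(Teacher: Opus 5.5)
Your proposal is correct and matches the paper's proof: the same split on whether $x \in C_j$, giving a difference of exactly $1/|C_1 \cup C_j|$ in the first case and $|C_1\cap C_j|/(|C_1 \cup C_j|\,|\tilde{C} \cup C_j|) \le 1/|\tilde{C}\cup C_j|$ in the second, followed by $|C_1|, |C_j| \le |C_1 \cup C_j|$. Your remark on the degenerate case $C_j = \emptyset$ is a small bit of extra care that the paper omits, but it does not change the argument.
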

\begin{proof}
    There are two cases to consider. First, if $x \in C_j$, then
    \begin{equation*}
        \left|F^*(C_1, C_j) - F^*(\tilde{C}, C_j)\right| = \left| \frac{|C_1 \cap C_j|}{|C_1 \cup C_j|} - \frac{|\tilde{C} \cap C_j|}{|\tilde{C} \cup C_j|} \right| = \frac{1}{|C_1 \cup C_j|}.
    \end{equation*}
    And if $x \not \in C_j$, then
    \begin{equation*}
        \left|F^*(C_1, C_j) - F^*(\tilde{C}, C_j)\right| = \left| \frac{|C_1 \cap C_j|}{|C_1 \cup C_j|} - \frac{|\tilde{C}  \cap C_j|}{|\tilde{C} \cup C_j|} \right| = \frac{|C_1 \cap C_j|}{|C_1 \cup C_j| \cdot |\tilde{C} \cup C_j|} \leq \frac{1}{|\tilde{C} \cup C_j|}.
    \end{equation*}
    In either case, $|C_1|, |C_j| \leq |C_1 \cup C_j| \leq |\tilde{C} \cup C_j|$ so $1 / |\tilde{C} \cup C_j| \leq 1 / |C_1 \cup C_j| \leq 1/ |C_1|, 1 / |C_j|$. 
\end{proof}

\begin{lem*}[\textbf{\ref{lem:two}}]
    Let $A = \sum_{C \in \cCone} |C|$.
    \begin{equation*}
        \left| \hat{F}^*_w(\cCone, \cCtwo) - \hat{F}^*_w(\tcCone, \cCtwo) \right| = \bigO\left(\frac{1}{A}\right).
    \end{equation*}
\end{lem*}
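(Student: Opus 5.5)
We work in the setting fixed just before Lemma~\ref{lem:one}: $\tcCone$ is obtained from $\cCone$ by replacing $C_1$ with $\tilde{C} = C_1 \cup \{x\}$. The removal case follows by swapping the roles of $\cCone$ and $\tcCone$, since then $A$ changes by only one and $\bigO(1/A)$ is unaffected. Write $\tilde A = A + 1$ for the total size of $\tcCone$. The plan is to split the difference of the two weighted averages into two parts. The first part comes from the single term whose cluster actually changed. The second part comes from the change of normalizing weight $1/A \to 1/\tilde A$ on all other terms.

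Concretely, $\hat F^*_w(\cCone,\cCtwo) = \frac{1}{A}\sum_{C\in\cCone}|C|\,F^*(C,\cCtwo)$, and similarly for $\tcCone$ with $\tilde A$. Every cluster other than $C_1$ appears in both sums with identical size and identical best-match value, so I would write
\begin{equation*}
\hat F^*_w(\cCone,\cCtwo) - \hat F^*_w(\tcCone,\cCtwo) = \Big(\frac{1}{A}-\frac{1}{\tilde A}\Big)S + \frac{|C_1|F^*(C_1,\cCtwo)}{A} - \frac{|\tilde C|F^*(\tilde C,\cCtwo)}{\tilde A},
\end{equation*}
where $S=\sum_{C\in\cCone\setminus\{C_1\}}|C|F^*(C,\cCtwo)$. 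Since $F^*\le 1$, we have $0\le S\le A$. Hence the first term is at most $A\cdot\frac{1}{A(A+1)} \le \frac{1}{A}$.

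For the changed term, I would rewrite it as
\begin{equation*}
\frac{|C_1|}{A}\big(F^*(C_1,\cCtwo)-F^*(\tilde C,\cCtwo)\big) + F^*(\tilde C,\cCtwo)\Big(\frac{|C_1|}{A}-\frac{|C_1|+1}{A+1}\Big).
\end{equation*}
The second bracket equals $\frac{|C_1|-A}{A(A+1)}$, which has absolute value at most $\frac1A$. For the first bracket, I need that a maximum over $\cCtwo$ inherits the pointwise bound of Lemma~\ref{lem:one}. This follows from the elementary fact $|\max_j a_j - \max_j b_j|\le \max_j|a_j-b_j|$, which gives $|F^*(C_1,\cCtwo)-F^*(\tilde C,\cCtwo)|\le 1/|C_1|$. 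If $\cCtwo=\emptyset$, both values are $0$ and there is nothing to prove. Multiplying by $|C_1|/A$ then yields at most $1/A$.

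Summing the three contributions gives a total bound of $3/A = \bigO(1/A)$. The only genuinely delicate step is the first bracket: a naive bound would only give $|C_1|/A$. It is therefore essential to use the first inequality of Lemma~\ref{lem:one}, the one with $1/|C_1|$ rather than $1/|C_j|$, so that the weight $|C_1|$ cancels. One must also check that passing to the maximum, where the best match may switch to a different $C_j$, does not break this cancellation. The max inequality above handles that.
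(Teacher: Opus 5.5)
Your proof is correct and follows the paper's argument: you split the difference into the renormalization term on the unchanged clusters (at most $A \cdot \frac{1}{A(A+1)}$) and the changed-cluster term, and you use the first inequality of Lemma~\ref{lem:one} to cancel the weight $|C_1|$, giving $3/A$; the paper groups the changed term over the common denominator $A(A+1)$ rather than as you do, which is immaterial. You also spell out the step $|\max_j a_j - \max_j b_j| \le \max_j |a_j - b_j|$ and the $\cCtwo = \emptyset$ edge case; the paper uses the former silently when applying Lemma~\ref{lem:one} to the best-match values and does not mention the latter.
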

\begin{proof}
    Recall that
    \begin{equation*}
        \hat{F}^*_w(\cCone, \cCtwo) = \frac{1}{A} \sum_{C \in \cCone} |C|F^*(C, \cCtwo).
    \end{equation*}
    Using the relationship between $\cCone$ and $\tcCone$, we get
    \begin{equation*}
        \hat{F}^*_w(\tcCone, \cCtwo) = \frac{1}{A+1}\left((|C_1|+1)F^*(C_1, \cCtwo) + \sum_{C_i \in \cCone, i \neq 1} |C_i|F^*(C_i,\cCtwo)\right).
    \end{equation*}
    It follows that
    \begin{align*}
    & \left| \hat{F}^*_w (\cCone, \cCtwo) - \hat{F}^*_w (\tcCone, \cCtwo) \right| \\
    \le& \left| \frac {|C_1| F^*(C_1, \cCtwo)} {A} - \frac {(|C_1|+1) F^*(\tilde{C}_1, \cCtwo)} {A+1} \right| \\
    & \qquad + \left| \frac {1}{A} - \frac {1}{A+1} \right| \cdot \left| \sum_{C_i \in \cCone, i \neq 1} |C_i| F^*(C_i, \cCtwo) \right| \\
    \le& \frac {A |C_1| \left| F^*(C_1, \cCtwo) - F^*(\tilde{C}_1, \cCtwo) \right| + \left| |C_1| F^*(C_1, \cCtwo) - A F^*(\tilde{C}_1, \cCtwo) \right| }{A \cdot (A+1)} \\
    & \qquad + \frac {1}{A \cdot (A+1)} \cdot A.
    \end{align*}
    Using Lemma~\ref{lem:one}, we get 
    \begin{eqnarray*}
    \left| \hat{F}^*_w (\cCone, \cCtwo) - \hat{F}^*_w (\tcCone, \cCtwo) \right| &\le& \frac {A+A+A}{A \cdot (A+1)} = \bigO\left(\frac{1}{A}\right),
    \end{eqnarray*}
    which finishes the proof.
\end{proof}

\begin{lem} \label{lem:three}
    Let $B = \sum_{C \in \cCtwo} |C|$ and $\gamma = \left| \{ C \in \cCtwo : C \text{ matched with } C_1 \text{ or } x \in C\} \right|$.
    \begin{equation*}
        \left| \hat{F}^*_w(\cCtwo, \cCone) - \hat{F}^*_w(\cCtwo, \tcCone) \right| \leq \frac{ \gamma }{B}.
    \end{equation*}
\end{lem}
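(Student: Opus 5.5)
Since the weights in $\hat{F}^*_w(\cCtwo,\cdot)$ depend only on $\cCtwo$, I expand
\begin{equation*}
\hat{F}^*_w(\cCtwo, \cCone) - \hat{F}^*_w(\cCtwo, \tcCone) = \frac{1}{B}\sum_{C \in \cCtwo} |C| \left( F^*(C, \cCone) - F^*(C, \tcCone) \right).
\end{equation*}
The normalizing constant $B$ is identical on both sides, so the whole argument reduces to bounding each term $|C|\,|F^*(C,\cCone) - F^*(C,\tcCone)|$.

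\textbf{Step 1: terms that vanish.} The clusterings $\cCone$ and $\tcCone$ differ only in that $C_1$ is replaced by $\tilde{C} = C_1 \cup \{x\}$. Fix $C \in \cCtwo$ with $x \notin C$ and $C$ not matched with $C_1$. I read ``$C$ matched with $C_1$'' as: $C_1$ attains the maximum in $F^*(C, \cCone)$, or $\tilde{C}$ attains the maximum in $F^*(C, \tcCone)$.
\begin{itemize}
\item Since $x \notin C$, we have $F^*(\tilde{C}, C) \le F^*(C_1, C)$: the intersection is unchanged and the union grows.
\item So the only score that changes can only decrease, and it was not the maximizer in $\cCone$.
\item The maximum over all other clusters is unchanged, so $F^*(C,\cCone) = F^*(C,\tcCone)$ and the term is zero.
\end{itemize}
If the reading of ``matched'' only refers to $\cCone$, this argument still works: when $x\notin C$ and $C_1$ is not a maximizer, lowering $C_1$'s score cannot change the max.

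\textbf{Step 2: the remaining terms.} At most $\gamma$ clusters $C$ remain. For each, I use the elementary fact that $|\max_i a_i - \max_i b_i| \le \max_i |a_i - b_i|$. Here the two families differ only in the entry for $C_1$ versus $\tilde{C}$, so
\begin{equation*}
|F^*(C,\cCone) - F^*(C,\tcCone)| \le |F^*(C_1, C) - F^*(\tilde{C}, C)| \le \frac{1}{|C|},
\end{equation*}
where the last inequality is the second bound of Lemma~\ref{lem:one}, applied with $C_j = C$ (using symmetry of $F^*$). Hence each remaining term contributes at most $|C| \cdot \frac{1}{|C|} = 1$ to the sum. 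Summing over at most $\gamma$ clusters and dividing by $B$ gives the bound $\gamma/B$.

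\textbf{Main obstacle.} The delicate step is Step~1, because it depends on the precise meaning of ``matched with $C_1$''. Ties in the maximum and the direction of the change need care. Adding $x$ can raise $F^*(\tilde{C}, C)$ only when $x \in C$, and those clusters are already counted in $\gamma$. Since the statement counts $C$ with $x \in C$ separately, I would fix the definition so that the clusters excluded from $\gamma$ are exactly those where $C_1$ is not a maximizer and $x\notin C$. The argument in Step~1 then handles them cleanly. The reverse modification (removing $x$) is symmetric by swapping the roles of $\cCone$ and $\tcCone$.
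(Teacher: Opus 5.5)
Your proof is correct and follows the same route as the paper: clusters $C \in \cCtwo$ with $x \notin C$ that are not matched with $C_1$ keep the same best-match score, and each of the at most $\gamma$ remaining terms contributes at most $|C| \cdot \frac{1}{|C|} = 1$ by the second bound of Lemma~\ref{lem:one}. Your explicit use of $|\max_i a_i - \max_i b_i| \le \max_i |a_i - b_i|$ supplies a step the paper leaves implicit when it invokes Lemma~\ref{lem:one}, and you correctly write $\le$ where the paper's strict $<$ fails when $C \cap C_1 = \emptyset$.
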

\begin{proof}
    First, note that if $x \not \in C$, then $F^*(C, \tilde{C}) < F^*(C, C_1)$.
    Thus, any cluster not containing $x$ that does not match with $C_1$ will not match with $\tilde{C}$.
    So, if $ x \not \in C$ and $C$ is not matched with $C_1$ then $F^*(C, \cCone) = F^*(C, \tcCone)$.
    Thus, using Lemma~\ref{lem:one} we get
    \begin{align*}
        \left| \hat{F}^*_w(\cCtwo, \cCone) - \hat{F}^*_w(\cCtwo, \tcCone) \right| &= \left| \sum_{C \in \cCtwo} \frac{|C|}{B} \left( F^*(C, \cCone) - F^*(C, \tcCone) \right) \right| \\
        & \leq \sum_{C \in \cCtwo} \frac{|C|}{B} \left| F^*(C, \cCone) - F^*(C, \tcCone) \right| \\
        & \leq \frac{ \gamma }{B},
    \end{align*}
    and the proof is finished.
\end{proof}

Now, we can prove Theorem~\ref{thm:robust}.
\begin{proof}[Proof of Theorem~\ref{thm:robust}] 
    Due to symmetry, without loss of generality, we may assume one of the following four cases.
    \begin{enumerate}
        \item[1a.] $x \not \in O^{(1)}$ and is added to existing cluster $C_1$ to make clustering $\tcCone$.
        \item[1b.] $x \in O^{(1)}$ and is added to existing cluster $C_1$ to make clustering $\tcCone$.
        \item[2a.] $x \not \in O^{(1)}$ and makes a new cluster $\{x\}$ to make clustering $\hcCone$.
        \item[2b.] $x \in O^{(1)}$ and makes a new cluster $\{x\}$ to make clustering $\hcCone$.
    \end{enumerate}
    Recall that $A = \sum_{C \in \cCone} |C|$ and $B = \sum_{C \in \cCtwo} |C|$.

    \textbf{Case 1a}:
    \begin{align*}
        & \left| F^*_{wo}(\cCone, \cCtwo) - F^*_{wo}(\tcCone, \cCtwo) \right| \\
        = & \left| 0.5 \left( \frac{|X \setminus O^{(1)}|}{|X|} \right) \hat{F}^*_w(\cCone, \cCtwo) + 0.5 \left( \frac{| X \setminus O^{(2)}|}{|X|} \right) \hat{F}^*_w(\cCtwo, \cCone) \right. \\
        & \left. \qquad \qquad - 0.5 \left( \frac{|X \setminus O^{(1)}|}{|X|} \right) \hat{F}^*_w(\tcCone, \cCtwo) - 0.5 \left( \frac{|X \setminus O^{(2)}|}{|X|} \right) \hat{F}^*_w(\cCtwo, \tcCone) \right| \\
        \leq & \frac{|X \setminus O^{(1)}|}{|X|} \frac{1}{A} + \frac{|X \setminus O^{(2)}|}{|X|} \frac{ \gamma }{B},
    \end{align*}
    by Lemmas~\ref{lem:two} and~\ref{lem:three}.
    And, since $|X \setminus O^{(1)}| \leq A$, we conclude that
    \begin{equation*}
        \left| F^*_{wo}(\cCone, \cCtwo) - F^*_{wo}(\tcCone, \cCtwo) \right| \leq \bigO \left( \frac{1}{|X|} \max \left\{ 1, \frac{|X \setminus O^{(2)}| \gamma }{B} \right\} \right).
    \end{equation*}

    \textbf{Case 1b}:
    First, we consider each direction of $\hat{F}^*_{wo}$ individually.     
    \begin{align*}
        & \left| \hat{F}^*_{wo}(\cCone, \cCtwo) - \hat{F}^*_{wo}(\tcCone, \cCtwo) \right| \\
        = & \left| \frac{|O^{(1)}|}{|X|}F^*(O^{(1)}, O^{(2)}) + \frac{|X \setminus O^{(1)}|}{|X|}\hat{F}^*_w(\cCone, \cCtwo) \right. \\
        & \qquad \qquad \left. - \frac{|\tilde{O}^{(1)}|}{|X|}F^*(\tilde{O}^{(1)}, O^{(2)}) - \frac{|X \setminus \tilde{O}^{(1)}|}{|X|}\hat{F}^*_w(\tcCone, \cCtwo) \right| \\
        \leq & \frac{1}{|X|}F^*(O^{(1)}, O^{(2)}) + \frac{|O^{(1)}|-1}{|X|}\left|F^*(O^{(1)}, O^{(2)}) - F^*(\tilde{O}^{(1)}, O^{(2)})\right| \\
        & \qquad \qquad + \frac{1}{|X|}\hat{F}^*_w(\tcCone, \cCtwo) + \frac{|X \setminus O^{(1)}|}{|X|} \left|\hat{F}^*_w(\cCone, \cCtwo) - \hat{F}^*_w(\tcCone, \cCtwo) \right| \\
        = & \; \bigO \left( \frac{1}{|X|} \right).
    \end{align*}
    And for the other direction,
    \begin{align*}
        & \left| \hat{F}^*_{wo}(\cCtwo, \cCone) - \hat{F}^*_{wo}(\cCtwo, \tcCone) \right| \\
        \leq & \frac{|O^{(2)}|}{|X|}\left| F^*(O^{(2)}, O^{(1)}) - F^*(O^{(2)}, \tilde{O}^{(1)})\right| + \frac{|X \setminus O^{(2)}|}{|X|}\left| \hat{F}^*_w(\cCtwo, \cCone) - \hat{F}^*_w(\cCtwo, \tcCone)\right|\\
        = & \; \bigO \left( \frac{1}{|X|} \frac{|X \setminus O^{(2)}| \gamma}{B}\right).
    \end{align*}
    So together,
    \begin{equation*}
        \left| F^*_{wo}(\cCone, \cCtwo) - F^*_{wo}(\tcCone, \cCtwo) \right| = \bigO \left( \frac{1}{|X|} \max \left\{ 1, \frac{|X \setminus O^{(2)}| \gamma}{B} \right\} \right).
    \end{equation*}

    \textbf{Preliminaries for Case 2a and 2b}:
    For these cases, we need to slightly alter the proofs of Lemmas~\ref{lem:two} and \ref{lem:three} for when $x$ creates a new cluster.
    First, a similar bound to Lemma~\ref{lem:two}:
    \begin{align*}
        & \left| \hat{F}^*_{w}(\cCone, \cCtwo) - \hat{F}^*_{w}(\hcCone, \cCtwo) \right| \\
        = & \left| \sum_{C \in \cCone} \frac{|C|}{A} F^*(C, \cCtwo) - \sum_{C \in \cCone} \frac{|C|}{A+1} F^*(C, \cCtwo) - \frac{1}{A+1} F^*(\{x\}, \cCtwo) \right| \\
        \leq & \left( \sum_{C \in \cCone} \frac{1}{A(A+1)} + \frac{1}{A+1} F^*(\{x\}, \cCtwo)\right) \\
        = & \; \bigO \left( \frac{1}{A} \right).
    \end{align*}
    And also a similar bound to Lemma~\ref{lem:three}.
    In this case, if $x \not \in C$, then $F^*(C, \{x\}) = 0$ so $C$ will not match with $\{x\}$.
    Let $\Gamma = |\{C \in \cCtwo : x \in \cCtwo\}|$
    Thus, 
    \begin{align*}
        & \left|\hat{F}^*_w(\cCtwo, \cCone) - \hat{F}^*_w(\cCtwo, \hcCone)\right|\\
        \leq & \sum_{C \in \cCtwo} \frac{|C|}{B} \left| F^*(C, \cCone) - F^*(C, \hcCone) \right| \\
        \leq & \frac{\Gamma}{|B|}.
    \end{align*}

    \textbf{Case 2a}:
    \begin{align*}
        & \left| F^*_{wo}(\cCone, \cCtwo) - F^*_{wo}(\hcCone, \cCtwo) \right| \\
        \leq & \frac{|X \setminus O^{(1)}|}{|X|}\left|\hat{F}^*_{w}(\cCone, \cCtwo) - \hat{F}^*_{w}(\hcCone, \cCtwo) \right| + \frac{|X \setminus O^{(2)}|}{|X|} \left| \hat{F}^*_w(\cCtwo, \cCone) - \hat{F}^*_w(\cCtwo, \hcCone) \right| \\
        = & \bigO \left( \frac{1}{|X|} \max \left\{ 1, \frac{|X \setminus O^{(2)}| \Gamma }{B} \right\} \right).
    \end{align*}

    \textbf{Case 2b}:
    \begin{align*}
        & \left| F^*_{wo}(\cCone, \cCtwo) - F^*_{wo}(\hcCone, \cCtwo) \right| \\
        \leq & \frac{1}{|X|}F^*(O^{(1)}, O^{(2)}) + \frac{|O^{(1)}|-1}{|X|}\left|F^*(O^{(1)}, O^{(2)}) - F^*(\hat{O}^{(1)}, O^{(2)})\right| \\
        & \qquad + \frac{1}{|X|}\hat{F}^*_w(\hcCone, \cCtwo) + \frac{|X \setminus O^{(1)}|}{|X|} \left|\hat{F}^*_w(\cCone, \cCtwo) - \hat{F}^*_w(\hcCone, \cCtwo)\right| \\
        & \qquad + \frac{|O^{(2)}|}{|X|}\left|F^*(O^{(2)}, O^{(1)}) - F^*(O^{(2)}, \hat{O}^{(1)}) \right| + \frac{|X \setminus O^{(2)}|}{|X|}\left| \hat{F}^*_w(\cCtwo, \cCone) - \hat{F}^*_w(\cCtwo, \hcCone)\right| \\
        = & \; \bigO \left( \frac{1}{|X|} \max \left\{ 1, \frac{|X \setminus O^{(2)}| \Gamma}{B} \right\} \right).
    \end{align*}
\end{proof}

\pagebreak
\section{Python Implementation} \label{sec:code}
\begin{minted}{python}
import numpy as np
import scipy.sparse as sp


def compare(c1, c2):
    # Inputs c1, c2: The clusterings to compare.
    # Both must be scipy csr matrices of size (n_clusters, n_objects)
    # with c[i,j] = 1 if object j is in cluster i.
    if c1.shape[0] == 0 and c2.shape[0] == 0: # No clusters
        return 1
    c1_sizes = np.asarray(c1.sum(axis=1)).reshape(-1)
    c1_props = c1_sizes / np.sum(c1_sizes)
    c2_sizes = np.asarray(c2.sum(axis=1)).reshape(-1)
    c2_props = c2_sizes / np.sum(c2_sizes)
    intersect = c1.astype("float64") @ c2.transpose().astype("float64")
    row_sizes = sp.diags(c1_sizes)
    col_sizes = sp.diags(c2_sizes)
    nz = intersect.copy() # Store non-zero data indices
    nz.data[:] = 1
    union = row_sizes @ nz + nz @ col_sizes - intersect
    union.data = 1 / union.data
    fs = intersect.multiply(union)
    fs_l = fs.max(axis=1).toarray().reshape(-1)
    fs_lw = np.sum(fs_l * c1_props)
    fs_r = fs.max(axis=0).toarray().reshape(-1)
    fs_rw = np.sum(fs_r * c2_props)
    c1_o = c1.count_nonzero(axis=0) == 0
    c2_o = c2.count_nonzero(axis=0) == 0
    o_intersect = np.sum(c1_o * c2_o)
    o_fs = 0
    if o_intersect > 0:
        o_fs = o_intersect / (np.sum(c1_o) + np.sum(c2_o) - o_intersect)
    alpha = 1 - np.sum(c1_o) / len(c1_o)
    beta = 1 - np.sum(c2_o) / len(c2_o)
    fs_l_wo = (1 - alpha) * o_fs + alpha * fs_lw
    fs_r_wo = (1 - beta) * o_fs + beta * fs_rw
    return 0.5 * (fs_l_wo + fs_r_wo)
\end{minted}

\end{document}